\documentclass{article}

\usepackage[T1]{fontenc}
\usepackage{iclr2027_conference,times}
\iclrfinalcopy
\usepackage{amsmath,amssymb,amsthm,mathtools}
\usepackage{booktabs,microtype,url,xcolor,graphicx,algorithm,algorithmic}
\usepackage[colorlinks=true,allcolors=blue,hypertexnames=false]{hyperref}

\newtheorem{theorem}{Theorem}
\newtheorem{lemma}[theorem]{Lemma}

\theoremstyle{definition}

\theoremstyle{remark}

\newcommand{\E}{\mathbb E}

\newcommand{\DeltaK}{\Delta_K}
\newcommand{\Dir}{\operatorname{Dir}}
\newcommand{\TV}{\operatorname{TV}}
\newcommand{\Reg}{\operatorname{Reg}}
\newcommand{\PUCal}{\operatorname{PUCal}}

\title{Dirichlet Follow-the-Leader Closes the Gap\\in Simultaneous Multiclass U-Calibration}
\author{\rule{0pt}{34pt}\makebox[\dimexpr\linewidth-2\tabcolsep\relax][c]{\begin{tabular}{c}{\fontseries{bx}\selectfont Pahan Dewasurendra}\\{\normalfont Johns Hopkins University}\end{tabular}}}

\begin{document}
\maketitle
\fancyhead{}

\begin{abstract}
Can one forecaster attain the optimal regret rate for every bounded proper loss and also adapt to every smooth proper loss?  Recent work answered this up to a dimension gap.  Its self-concordant perturbation gives roughly $K^{5/4}\sqrt T$ worst-case regret and incurs an additional $\beta\sqrt K\log K$ for $\beta$-smooth losses.  We close both gaps with a one-line forecaster.  After observing class counts $c_{t-1}$, draw the next prediction from $\Dir(c_{t-1})$, on the face of classes seen so far.  This is a fresh Bayesian bootstrap of the outcomes.  The analysis rests on an exact identity: averaging any bounded proper loss under $\Dir(\alpha)$ equals a discrete derivative of its Dirichlet-averaged Bayes risk.  The identity makes the be-the-perturbed-leader term telescope to a nonpositive Jensen gap.  A one-count likelihood ratio then bounds stability by the inverse square root of that class's count.  The resulting single, horizon-free algorithm satisfies
\begin{align*}
 \sup_{\ell}\E\Reg_\ell&\leq4\sqrt{S_TT}\leq4\sqrt{KT},\\
 \E\Reg_\ell&\leq\tfrac52\beta(1+\log T)
 \quad\text{for every $\beta$-smooth proper $\ell$.}
\end{align*}
Here $S_T$ is the number of observed classes.  Known lower bounds show that both rates are optimal in their nontrivial regimes.  The proof covers nondifferentiable losses and changes of the active simplex face.
\end{abstract}

\section{Introduction}

A probabilistic forecast is often consumed by an agent whose downstream utility is unknown to the forecaster.  Proper losses capture this uncertainty: each proper loss corresponds to a way of valuing probabilistic predictions, and truthful prediction minimizes expected loss.  U-calibration asks for one online sequence of forecasts with low regret under every bounded proper loss \citep{kleinberg2023ucal}.  This requirement is substantially stronger than optimizing a fixed score such as Brier loss.

For $K$ outcomes, \citet{luo2024optimal} established the optimal $\Theta(\sqrt{KT})$ worst-case pseudo-U-calibration rate.  Their optimal forecaster does not adapt to easier scores.  In particular, it can incur $\Omega(\sqrt T)$ regret on squared loss even though Follow-the-Leader (FTL) has logarithmic regret.  Very recent work showed that this conflict is not fundamental \citep{frongillo2026simultaneous}.  A carefully shaped self-concordant perturbation simultaneously gives logarithmic regret on smooth scores and sublinear regret on all proper scores.  Its bounds leave two linked gaps.  The general rate is $\widetilde O(K^{5/4}\sqrt T)$ rather than $O(\sqrt{KT})$, and the smooth rate contains an additive $O(\beta\sqrt K\log K)$ term.  The paper explicitly asks whether simultaneous optimality is possible.

We answer this question affirmatively.  Let $c_t=\sum_{s\leq t}y_s$ be the vector of outcome counts.  At time $t\geq2$, independently sample
\[
 P_t\sim\Dir(c_{t-1}).
\]
Zero count coordinates are fixed at zero, so the distribution lives on the face spanned by classes observed so far.  Equivalently, assign independent Gamma weights to the observed outcomes and normalize.  Thus the algorithm is exactly a fresh Bayesian bootstrap \citep{rubin1981bootstrap}.  It requires no horizon, smoothness parameter, learning rate, or optimization oracle. The Bayesian-bootstrap draw itself is classical.  Our claims concern its new adversarial proper-loss analysis and simultaneous regret guarantees.

The main proof is not a generic posterior-sampling argument.  If $f$ is the concave Bayes risk of a proper loss and $F(\alpha)=\E_{P\sim\Dir(\alpha)}f(P)$, we prove
\begin{equation}
 \E_{P\sim\Dir(\alpha)}\ell(P,e_j)
 =nF(\alpha)-(n-1)F(\alpha-e_j),
 \qquad n=\sum_i\alpha_i.                 \tag{1}\label{eq:key-intro}
\end{equation}
This identity applies to arbitrary bounded proper losses, including polyhedral and V-shaped losses.  It turns the perturbed-leader contribution into $T(F(c_T)-f(c_T/T))\leq0$.  The remaining stability term compares two Dirichlet laws whose parameters differ by one count.  Their likelihood ratio is linear in one coordinate, which gives stability $O(1/\sqrt m)$ when that coordinate has appeared $m$ times.  Summing separately within each class produces $O(\sum_i\sqrt{N_i})=O(\sqrt{KT})$.

The same distribution is automatically adapted to smooth losses.  Its mean is empirical FTL and its squared radius is at most $1/t$.  Centering removes the first-order smoothness term, leaving $O(\beta/t)$.  A short semiconcavity argument also gives an explicit $2\beta H_T$ FTL bound under the one-sided smoothness definition used in prior work.

Our contributions are:
\begin{itemize}
\item We give one efficient, horizon-free forecaster with $4\sqrt{S_TT}\leq4\sqrt{KT}$ expected regret for every bounded proper loss and simultaneous $O(\beta\log T)$ regret for every bounded $\beta$-smooth proper loss.
\item We prove the count-deletion identity in \eqref{eq:key-intro}, including nondifferentiable Bayes risks, zero count coordinates, and the singular case $\alpha_j=1$.
\item We isolate two exact geometric facts behind simultaneous adaptation: one-count Dirichlet total variation controls nonsmooth loss, while centered covariance controls smooth loss.
\end{itemize}

We emphasize the quantifier.  Our result controls $\sup_\ell\E\Reg_\ell$, called pseudo-U-calibration.  It does not claim the stronger $\E\sup_\ell\Reg_\ell$ for the infinite class of all proper losses. This is the same expected-regret criterion studied by \citet{frongillo2026simultaneous}.

\section{Setting and algorithm}\label{sec:setting}

Let $e_1,\ldots,e_K$ denote the vertices of $\DeltaK$.  At round $t$, a forecaster chooses a possibly random $P_t\in\DeltaK$, then observes an outcome $y_t\in\{e_1,\ldots,e_K\}$.  We first state results for an oblivious outcome sequence.  Appendix~\ref{app:adaptive} gives the standard extension to a nonanticipating adaptive adversary when fresh randomness is used each round.

A loss $\ell:\DeltaK\times\{e_i\}_{i=1}^K\to[-1,1]$ is \emph{proper} if
\[
 p\in\arg\min_{q\in\DeltaK}\E_{Y\sim p}\ell(q,Y)
 \quad\text{for every }p\in\DeltaK.
\]
For an outcome sequence, propriety makes its empirical distribution $q_T=T^{-1}\sum_ty_t$ a hindsight minimizer.  Define
\[
 \Reg_\ell(T)=\E\sum_{t=1}^T\ell(P_t,y_t)-T f(q_T),
 \qquad f(p)=\sum_i p_i\ell(p,e_i).
\]
The expectation is over the forecaster.  Let $\mathcal L$ be all proper losses with range in $[-1,1]$ and let $\PUCal_T=\sup_{\ell\in\mathcal L}\Reg_\ell(T)$.

Following \citet{frongillo2026simultaneous}, a differentiable loss is $\beta$-smooth when
\begin{equation}
 \ell(q,y)-\ell(p,y)
 \leq\langle\nabla_p\ell(p,y),q-p\rangle
       +\frac\beta2\|q-p\|_2^2                 \tag{2}\label{eq:smooth}
\end{equation}
for every $p,q\in\DeltaK$ and outcome $y$. Gradients and differentiability on the closed simplex are understood relative to its affine hull, including one-sided differentiability on boundary faces.

For a nonnegative parameter vector $\alpha$, write $A=\{i:\alpha_i>0\}$. We define $\Dir(\alpha)$ on the face $\Delta_A$ by drawing independent $G_i\sim\mathrm{Gamma}(\alpha_i,1)$ for $i\in A$, setting $P_i=G_i/\sum_{k\in A}G_k$, and setting $P_i=0$ outside $A$.

\begin{algorithm}[t]
\caption{Dirichlet Follow-the-Leader}
\label{alg:dftl}
\begin{algorithmic}[1]
\STATE Initialize $c_0=0$ and predict $P_1=(1/K,\ldots,1/K)$.
\FOR{$t=1,2,\ldots,T$}
  \IF{$t\geq2$}
    \STATE Draw $P_t\sim\Dir(c_{t-1})$ independently of past draws.
  \ENDIF
  \STATE Observe $y_t$ and set $c_t=c_{t-1}+y_t$.
\ENDFOR
\end{algorithmic}
\end{algorithm}

\begin{theorem}[Simultaneously optimal regret]\label{thm:main}
For every $K,T\geq1$, let $S_T$ be the number of distinct outcomes observed. Algorithm~\ref{alg:dftl} satisfies
\[
 \PUCal_T\leq4\sqrt{S_TT}\leq4\sqrt{\min\{K,T\}T}.
\]
Simultaneously, for every $\beta$-smooth $\ell\in\mathcal L$,
\[
 \Reg_\ell(T)\leq\frac52\beta H_T
 \leq\frac52\beta(1+\log T),
\]
where $H_T=\sum_{t=1}^T1/t$.
\end{theorem}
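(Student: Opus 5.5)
We prove the two bounds separately. Both rest on the count-deletion identity \eqref{eq:key-intro}, which we establish first as a lemma. Write $F(\alpha)=\E_{P\sim\Dir(\alpha)}f(P)$, with $f$ the Bayes risk.

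\textbf{The count-deletion identity.} Propriety gives the supporting-hyperplane representation $\ell(p,e_j)=f(p)+\langle g(p),e_j-p\rangle$ for a supergradient $g(p)$ of the concave $f$. We then expand $nF(\alpha)-(n-1)F(\alpha-e_j)$ using the Gamma representation. Write $P=G/\sum_k G_k$ with $\sum_k G_k\sim\mathrm{Gamma}(n)$ independent of $P$. Size-biasing by $P_j$ turns $\Dir(\alpha-e_j)$ into $\Dir(\alpha)$, since $\E[P_j h(P)]$ under $\Dir(\alpha-e_j)$ is proportional to the $\Dir(\alpha)$ expectation of $h$.

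Rather than differentiate, we plan to obtain the identity by an exchange argument. Under $\Dir(\alpha)$, draw a categorical $J\sim P$. The posterior of $P$ given $J=j$ is $\Dir(\alpha+e_j)$, which links the averages $\E\,\ell(P,e_j)$ and $\E f(P)$ through Dirichlet conjugacy. Integrating the one-parameter family along the ray $\alpha-e_j+s e_j$ should give the discrete-derivative form. Nondifferentiability is harmless, because only $\ell$ itself appears: $f$ is finite and concave, and conjugacy identities hold for any bounded measurable function. The edge cases $\alpha_j=1$ (where $\alpha-e_j$ drops to a smaller face) and zero coordinates are handled by the face convention in the definition of $\Dir$.

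\textbf{Worst-case bound.} We use the be-the-perturbed-leader decomposition. Let $Q_t\sim\Dir(c_t)$ be the "cheating" draw that already includes $y_t$. Applying \eqref{eq:key-intro} with $\alpha=c_t$ and $y_t=e_j$ gives
\[
\E\,\ell(Q_t,y_t)=tF(c_t)-(t-1)F(c_{t-1}).
\]
This telescopes to $TF(c_T)$, and Jensen gives $TF(c_T)\le Tf(c_T/T)$ because $f$ is concave and the mean of $\Dir(c_T)$ is $c_T/T$. So the cheating sequence has nonpositive regret.

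The remaining stability term is $\sum_t\E[\ell(P_t,y_t)-\ell(Q_t,y_t)]\le 2\sum_t\TV(\Dir(c_{t-1}),\Dir(c_t))$. When $y_t=e_j$ with previous count $m=c_{t-1,j}\ge1$, the likelihood ratio $d\Dir(c_t)/d\Dir(c_{t-1})=(t-1)P_j/m$ is linear in $P_j$. Hence the TV distance equals $\tfrac12\E|(t-1)P_j/m-1|$, and by Cauchy--Schwarz this is at most $\tfrac12$ times the standard deviation of that ratio. The variance of $P_j$ under $\Dir(c_{t-1})$ gives $\mathrm{sd}\le\sqrt{1/m}$.

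A first appearance of a class ($m=0$), and the round $t=1$, cost at most $2$ each, for a total of at most $2S_T$. Summing within each class gives $\sum_i\sum_{m=1}^{N_i}m^{-1/2}\le 2\sum_i\sqrt{N_i}\le2\sqrt{S_TT}$ by Cauchy--Schwarz. Tracking constants should yield $4\sqrt{S_TT}$, absorbing $2S_T\le 2\sqrt{S_TT}$ since $S_T\le T$.

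\textbf{Smooth bound.} With $\bar p_t=c_{t-1}/(t-1)$, apply \eqref{eq:smooth} at $p=\bar p_t$. Since $\E P_t=\bar p_t$, the linear term vanishes, leaving
\[
\E\ell(P_t,y_t)\le\ell(\bar p_t,y_t)+\tfrac\beta2\E\|P_t-\bar p_t\|^2\le\ell(\bar p_t,y_t)+\tfrac{\beta}{2t}.
\]
The second inequality uses that the Dirichlet total variance is $(1-\|\bar p\|^2)/t\le1/t$. It remains to bound the FTL regret by $2\beta H_T$. One-sided smoothness makes $f$ semiconcave: $f(q)\ge \ell(p,\cdot)$-averaged minus $\tfrac\beta2\|q-p\|^2$. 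We then use the standard FTL-BTL telescoping $\sum_t[\ell(\bar p_t,y_t)-\ell(\bar p_{t+1},y_t)]$ together with $\|\bar p_{t+1}-\bar p_t\|\le\sqrt2/t$, giving $O(\beta/t)$ per round. The first round contributes at most $2\le\beta\cdot$const, and a careful accounting aims at $2\beta H_T$.

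\textbf{Main obstacle.} I expect the main obstacle to be the identity itself for nondifferentiable $f$ and at face changes. The fallback is to prove it for smooth $f$ by Gamma-scaling integration by parts, then pass to general proper losses by approximation with bounded convergence.
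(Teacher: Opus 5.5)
Your worst-case half matches the paper: the ghost draw $Q_t\sim\Dir(c_t)$, telescoping to $TF(c_T)\le Tf(q_T)$, the linear likelihood ratio $(t-1)P_j/m$, and per-class summation. The centering step for smooth losses also matches. The genuine gap is in the smooth FTL bound.

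\textbf{The smooth FTL bound.} By itself, one-sided smoothness \eqref{eq:smooth} gives no control on the size of $\nabla_p\ell$. So the step bound $\|q_t-q_{t-1}\|_2\le\sqrt2/t$ only handles the quadratic term in
\[
 \ell(q_{t-1},e_j)-\ell(q_t,e_j)\le\tfrac{1}{t}\langle\nabla_p\ell(q_t,e_j),q_{t-1}-e_j\rangle+\tfrac{\beta}{2}\|q_t-q_{t-1}\|_2^2 .
\]
The linear term is first order in $1/t$ with an uncontrolled coefficient.

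Your inequality $\langle q,\ell_p\rangle-f(q)\le\frac{\beta}{2}\|q-p\|_2^2$ would fix this, but you assert it rather than derive it. Applying \eqref{eq:smooth} at $q$ and averaging over $Y\sim q$ leaves $\sum_iq_i\langle\nabla_p\ell(q,e_i),p-q\rangle$. Optimality of the truthful report makes this vanish for $p$ in the face of $q$. When $p$ lies off that face it only gives $\ge0$, which is the wrong sign. That is exactly the situation at $t=1$ ($q=e_{y_1}$, $p=P_1$), and, in your per-round BTL form, at every first appearance of a class.

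Relatedly, ``the first round contributes at most $2\le\beta\cdot$const'' is false. $\beta$ can be arbitrarily small for a nonconstant proper loss ($\varepsilon$ times Brier), so an additive $O(1)$ term destroys the $\frac{5}{2}\beta H_T$ bound.

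The missing idea is the endpoint argument of Lemma~\ref{lem:smooth-ftl}:
\begin{itemize}
\item propriety at $(1-\varepsilon)e_j+\varepsilon e_i$ gives $D_{e_i-e_j}\ell(e_j,e_j)\le0$;
\item minimality of $e_j$ for $\ell(\cdot,e_j)$ gives $\ge0$, so every tangent derivative vanishes at $e_j$;
\item concavity of $\phi(s)-\frac{\beta d^2}{2}s^2$, with $\phi(s)=\ell((1-s)p+se_j,e_j)$, then yields $\phi'(s)\ge-\beta d^2(1-s)$;
\item hence $\ell(q_{t-1},e_j)-\ell(q_t,e_j)\le2\beta/t$ for every $t$, including $t=1$ and face changes.
\end{itemize}
The same trick at $(1-\varepsilon)q+\varepsilon p$ shows $\sum_iq_i\langle\nabla_p\ell(q,e_i),p-q\rangle=0$ for all $p,q$, which proves your inequality. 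Combined with the exact identity $\sum_t\ell(q_{t-1},y_t)-Tf(q_T)=\sum_t t\,[\langle q_t,\ell_{q_{t-1}}\rangle-f(q_t)]$, it gives FTL regret at most $\beta H_T$.

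\textbf{The count-deletion identity.} Your primary exchange route cannot work as described. The conjugacy relation $\E_{\Dir(\alpha)}[P_jh(P)]=\frac{\alpha_j}{n}\E_{\Dir(\alpha+e_j)}h(P)$ and the definition $f(p)=\langle p,\ell_p\rangle$ hold for every loss, yet the identity fails for improper losses. Take $K=2$, $\ell(p,e_1)=p_1$, $\ell(p,e_2)=0$, $\alpha=(1,1)$, $j=1$: the left side is $\frac{1}{2}$ and the right side is $2\cdot\frac{1}{3}-0=\frac{2}{3}$. Propriety must enter as derivative information (the supergradient property). Turning that into an identity between Dirichlet averages is an integration by parts, which is what the paper does on the face with the field $v=e_j-p$.

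Your fallback is the right move and works without any smoothing. Write $|x|=\sum_kx_k$ and let $\tilde f(x)=|x|f(x/|x|)$.
\begin{itemize}
\item $\tilde f$ is concave, $1$-homogeneous and $1$-Lipschitz in $\|\cdot\|_1$.
\item $\ell_p$ is a supergradient of $\tilde f$ at every $x$ with $x/|x|=p$, so $\partial_j\tilde f(G)=\ell(P,e_j)$ almost surely.
\item One-dimensional Gamma integration by parts in $G_j$ gives $\E\,\ell(P,e_j)=\E\,\tilde f(G)-(\alpha_j-1)\E[\tilde f(G)/G_j]$.
\item The factor $(\alpha_j-1)/G_j$ lowers the shape of $G_j$ by one, and independence of $|G|$ and $P$ turns the two terms into $nF(\alpha)$ and $(n-1)F(\alpha-e_j)$.
\end{itemize}
When $\alpha_j=1$, the second term is replaced by the boundary term $\E\,\tilde f(G^{(j)})$, where $G^{(j)}$ is $G$ with its $j$th coordinate set to $0$. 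This is again $(n-1)F(\alpha-e_j)$, so this route even avoids the paper's total-variation/weak-convergence limit.
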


For $K\leq T$, the $\sqrt{KT}$ worst-case rate is matched by the multiclass V-shaped lower bound of \citet{luo2024optimal}.  Squared loss gives an $\Omega(\log T)$ lower bound for the smooth subclass.  Hence both worst-case rates in Theorem~\ref{thm:main} are optimal up to universal constants in their nontrivial regimes.  The support-sensitive refinement is an additional instance-dependent guarantee.

\section{A Dirichlet identity for every proper loss}\label{sec:identity}

Write $\ell_p=(\ell(p,e_1),\ldots,\ell(p,e_K))$.  Propriety gives
\begin{equation}
 f(q)\leq\langle q,\ell_p\rangle,
 \qquad f(p)=\langle p,\ell_p\rangle.          \tag{3}\label{eq:supergrad}
\end{equation}
Thus $f$ is concave and $\ell_p$ is a supergradient representation.  Since $\|\ell_p\|_\infty\leq1$, applying \eqref{eq:supergrad} in both directions also gives
\begin{equation}
 |f(p)-f(q)|\leq\|p-q\|_1.                    \tag{4}\label{eq:lipschitz-f}
\end{equation}

\begin{lemma}[Count deletion]\label{lem:identity}
Let $\alpha\geq0$, let $n=\sum_i\alpha_i$, and suppose $\alpha_j\geq1$. For $F(\alpha)=\E_{P\sim\Dir(\alpha)}f(P)$,
\[
 \E_{P\sim\Dir(\alpha)}\ell(P,e_j)
 =nF(\alpha)-(n-1)F(\alpha-e_j).
\]
The statement uses the active-face convention above.  When $n=1$, the second term is zero.
\end{lemma}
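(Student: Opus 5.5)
The plan is to avoid differentiating the Dirichlet density in $\alpha$ and instead work with the Gamma construction of $\Dir(\alpha)$, splitting off one exponential variable in coordinate $j$. Since $\alpha_j\geq1$, write $G_j=G'_j+E$, where $G'_j\sim\mathrm{Gamma}(\alpha_j-1,1)$ and $E\sim\mathrm{Gamma}(1,1)$ are independent. When $\alpha_j=1$ we take $G'_j\equiv0$, which is consistent with the active-face convention. Let $G'$ be the Gamma vector with parameter $\alpha-e_j$, let $S'=\sum_iG'_i$, and set
\[
 P(x)=\frac{G'+x e_j}{S'+x},\qquad h(x)=(S'+x)\,f(P(x)),\qquad x\geq0 .
\]
Then $P(E)\sim\Dir(\alpha)$ and $P(0)\sim\Dir(\alpha-e_j)$. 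When $n=1$ we have $S'=0$ and $h(0)=0$, which matches the convention that the second term vanishes. By the standard independence of the Gamma total and the normalized direction, $\E[(S'+E)f(P(E))]=nF(\alpha)$ and $\E[S'f(P(0))]=(n-1)F(\alpha-e_j)$. So the lemma reduces to showing
\[
 \E[\ell(P(E),e_j)]=\E[h(E)]-\E[h(0)] .
\]
This is exactly the exponential integration-by-parts identity $\E[h'(E)]=\E[h(E)]-h(0)$, applied conditionally on $G'$, provided that $h'(x)=\ell(P(x),e_j)$ for almost every $x$.

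The main step is identifying $h'$ without assuming that $f$ is differentiable. I will use the perspective $\Phi(g)=(\sum_ig_i)f(g/\sum_ig_i)$ on the nonnegative orthant. From \eqref{eq:supergrad}, multiplied through by $\sum_i g'_i$, one gets $\Phi(g')\leq\langle g',\ell_p\rangle$ for every $g'$, with equality at $g'=g$, where $p=g/\sum_ig_i$. Subtracting the two relations gives $\Phi(g')-\Phi(g)\leq\langle g'-g,\ell_p\rangle$. Applying this along the ray $g=G'+xe_j$ gives
\[
 d\,\ell(P(x+d),e_j)\leq h(x+d)-h(x)\leq d\,\ell(P(x),e_j)\quad(d>0),
\]
and the symmetric statement for $d<0$. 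Hence $h$ is concave and $1$-Lipschitz, because the loss takes values in $[-1,1]$. Moreover, the left derivative of $h$ at $x$ is at least $\ell(P(x),e_j)$ and the right derivative is at most $\ell(P(x),e_j)$. At every point where the concave function $h$ is differentiable, which is all but countably many points, these bounds force $h'(x)=\ell(P(x),e_j)$. No continuity of $\ell$ in $p$ is needed.

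Finally, I justify integration by parts. Because $h$ is Lipschitz, it is absolutely continuous and has linear growth. Therefore $\int_0^\infty h'(x)e^{-x}dx=[h e^{-x}]_0^\infty+\int_0^\infty h e^{-x}dx=\int_0^\infty h(x)e^{-x}\,dx-h(0)$, and all the integrals are finite. Taking expectation over $G'$ with Fubini (every quantity is bounded by $1+S'+E$, which is integrable) gives the display above and hence the lemma.

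I expect the only delicate points to be the nondifferentiable case, which the sandwich argument handles, and bookkeeping on faces. The zero coordinates of $\alpha$ never enter the construction. When $\alpha_j=1$, $P(0)$ lies on the face without $j$, which is exactly $\Dir(\alpha-e_j)$ under the stated convention, and the supergradient inequality is still valid there because \eqref{eq:supergrad} holds on all of $\DeltaK$.
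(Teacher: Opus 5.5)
Your proof is correct, and it takes a genuinely different route from the paper's. The paper works on the active face itself. It uses a.e.\ differentiability of $f$ to write $\ell(p,e_j)=f(p)+D_{e_j-p}f(p)$. It then integrates by parts against the Dirichlet density with the vector field $e_j-p$, a Stein-type computation whose boundary flux at $p_j=0$ vanishes only when $\alpha_j>1$. The resulting $\E[f(P)/P_j]$ term is converted by a ratio of normalizers. A mollification and exhaustion argument, sketched in its appendix, justifies the nonsmooth integration by parts. Finally, $\alpha_j=1$ is treated as a separate limit, where the left side needs total-variation convergence because $\ell$ may be discontinuous.

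You instead lift to the Gamma cone. There the Savage representation becomes the statement that $\ell_p$ is a supergradient of the $1$-homogeneous perspective $\Phi$. Restricting $\Phi$ to the ray $G'+xe_j$ gives a one-dimensional concave, $1$-Lipschitz $h$ with $h'(x)=\ell(P(x),e_j)$ off a countable set. The lemma then reduces to the exponential Stein identity $\E h'(E)=\E h(E)-h(0)$. Gamma additivity and the independence of the total from the direction turn $\E h(E)$ and $\E h(0)$ into $nF(\alpha)$ and $(n-1)F(\alpha-e_j)$.

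Your route buys uniformity and elementarity. It needs no multivariate divergence theorem, no boundary-flux analysis, no mollification, and no limit in $\alpha$. The cases $\alpha_j=1$, $n=1$, discontinuous $\ell$, and nondifferentiable $f$ are all covered by the same few lines, and the approximation steps the paper only sketches are avoided altogether. It also makes ``count deletion'' literal: deleting a count means setting the exponential summand to zero. The paper's route stays in the language of densities on the simplex and makes the operator $\operatorname{div}v+D_v\log\rho_\alpha$ explicit. That is the more natural template if one wanted analogous identities for non-Dirichlet perturbations. Your argument, by contrast, leans on the Gamma representation that is special to the Dirichlet family.
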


\begin{proof}
First suppose $\alpha_j>1$ and work relative to the active face.  A concave function is differentiable almost everywhere on the relative interior. At each such $p$, \eqref{eq:supergrad} makes the tangent component of $\ell_p$ the unique supergradient of $f$.  Therefore
\begin{equation}
 \ell(p,e_j)=f(p)+D_{e_j-p}f(p)                \tag{5}\label{eq:savage-dir}
\end{equation}
almost everywhere.

Let $\rho_\alpha$ be the Dirichlet density and set $v(p)=e_j-p$.  A direct calculation on the active face gives
\[
 \operatorname{div}v+D_v\log\rho_\alpha
 =\frac{\alpha_j-1}{p_j}-(n-1).
\]
The boundary flux vanishes.  At a face $p_i=0$ with $i\neq j$, the normal component $v_i=-p_i$ cancels the possible density singularity.  At $p_j=0$, the assumption $\alpha_j>1$ makes the flux vanish.  Integration by parts and \eqref{eq:savage-dir} now yield
\begin{align*}
 \E_\alpha D_vf(P)
 &=(n-1)F(\alpha)
   -(\alpha_j-1)\E_\alpha\frac{f(P)}{P_j},\\
 (\alpha_j-1)\E_\alpha\frac{f(P)}{P_j}
 &=(n-1)F(\alpha-e_j),
\end{align*}
where the second equality is the ratio of Dirichlet normalizers.  Combining the two displays proves the claim for $\alpha_j>1$.

If $\alpha_j=1$, apply the proved identity to $\alpha+\varepsilon e_j$ and let $\varepsilon\downarrow0$.  The first Dirichlet law converges in total variation, which is enough for the possibly discontinuous bounded function $\ell(\cdot,e_j)$.  After deleting one count, the law with parameter $\varepsilon$ in coordinate $j$ converges weakly to the lower-dimensional face.  Equation~\eqref{eq:lipschitz-f} makes $f$ continuous, so its expectations converge.  The case $n=1$ is immediate from $\ell(e_j,e_j)=f(e_j)$.
\end{proof}

The distinction between total variation and weak convergence in the last paragraph matters.  A bounded proper loss can jump across a decision boundary, while its Bayes risk remains continuous.

\section{Worst-case analysis}\label{sec:worst}

For analysis, after observing round $t$, draw a ghost prediction $Q_{t+1}\sim\Dir(c_t)$.  For any fixed loss,
\begin{align}
 \Reg_\ell(T)
 &=\sum_{t=1}^T\E[\ell(P_t,y_t)-\ell(Q_{t+1},y_t)] \notag\\
 &\quad+\sum_{t=1}^T\E\ell(Q_{t+1},y_t)-Tf(q_T).  \tag{6}\label{eq:decomp}
\end{align}
If $y_t=e_j$, Lemma~\ref{lem:identity} gives
\[
 \E\ell(Q_{t+1},y_t)=tF(c_t)-(t-1)F(c_{t-1}).
\]
The second line of \eqref{eq:decomp} therefore telescopes to
\[
 T\{F(c_T)-f(q_T)\}\leq0,                     \tag{7}\label{eq:jensen}
\]
because $\E Q_{T+1}=q_T$ and $f$ is concave.

It remains to control stability.  The next elementary estimate is where the coordinate adaptivity enters.

\begin{lemma}[One-count stability]\label{lem:tv}
Let $c\geq0$, $n=\sum_i c_i$, and $c_j=m>0$.  Then
\[
 \TV(\Dir(c),\Dir(c+e_j))
 \leq\frac12\sqrt{\frac{n-m}{m(n+1)}}
 \leq\frac1{2\sqrt m}.
\]
\end{lemma}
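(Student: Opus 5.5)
Both laws live on the same active face $A=\{i:c_i>0\}$, since $c_j=m>0$ means adding $e_j$ does not enlarge the support. So we can compare densities directly. The likelihood ratio is
\[
 \frac{d\Dir(c+e_j)}{d\Dir(c)}(p)=\frac{B(c)}{B(c+e_j)}\,p_j=\frac{n}{m}\,p_j,
\]
where $B$ is the multivariate Beta normalizer; we use $\Gamma(m+1)/\Gamma(m)=m$ and $\Gamma(n+1)/\Gamma(n)=n$. The ratio is therefore linear in the single coordinate $p_j$, and
\[
 \TV(\Dir(c),\Dir(c+e_j))=\tfrac12\,\E_{P\sim\Dir(c)}\Bigl|\tfrac{n}{m}P_j-1\Bigr|=\tfrac{n}{2m}\,\E|P_j-\E P_j|,
\]
because $\E P_j=m/n$ under $\Dir(c)$.

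The plan is then to bound the absolute deviation by Cauchy--Schwarz, $\E|P_j-\E P_j|\leq\sqrt{\operatorname{Var}P_j}$. The marginal $P_j$ is $\mathrm{Beta}(m,n-m)$, so $\operatorname{Var}P_j=m(n-m)/(n^2(n+1))$. Substituting gives
\[
 \TV\leq\frac{n}{2m}\sqrt{\frac{m(n-m)}{n^2(n+1)}}=\frac12\sqrt{\frac{n-m}{m(n+1)}},
\]
which is the first inequality. The second follows from $(n-m)/(n+1)\leq1$.

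The only delicate point is the degenerate case $n=m$, where the face is the single vertex $e_j$ and the Beta$(m,0)$ marginal is a point mass. There both laws equal $\delta_{e_j}$, so $\TV=0$ and the bound holds trivially. No real obstacle remains beyond verifying the normalizer ratio, which is a one-line Gamma-function identity.
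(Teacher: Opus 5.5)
Your proposal is correct and follows the paper's proof essentially step for step. You use the same likelihood ratio $np_j/m$ on the common active face, write TV as half the expected absolute deviation, apply Cauchy--Schwarz with the $\mathrm{Beta}(m,n-m)$ variance, and treat $n=m$ as a shared point mass. The paper only adds an exact closed-form expression for the TV, which the bound does not need.
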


\begin{proof}
On their common active face, the likelihood ratio is
\[
 \frac{d\Dir(c+e_j)}{d\Dir(c)}(p)=\frac{np_j}{m}.
\]
Consequently,
\[
 \TV(\Dir(c),\Dir(c+e_j))
 =\frac12\E_{P\sim\Dir(c)}\left|\frac{nP_j}{m}-1\right|.
\]
Cauchy--Schwarz and $\operatorname{Var}(P_j)=m(n-m)/(n^2(n+1))$ prove both inequalities. When $b=n-m>0$, direct integration at the likelihood-ratio crossing $p_j=m/n$ also gives the exact expression
\[
 \TV=\frac{(m/n)^m(1-m/n)^b}{mB(m,b)}.
\]
For $b=0$, both laws are the same point mass and the distance is zero.
\end{proof}

If class $j$ has appeared $m>0$ times before round $t$, the range $[-1,1]$ and Lemma~\ref{lem:tv} bound the corresponding stability term by $2\TV\leq1/\sqrt m$.  If it is the first appearance, the supports differ and the trivial bound is $2$.  Let $N_i=c_{T,i}$ and $S=|\{i:N_i>0\}|$.  Equations \eqref{eq:decomp} and \eqref{eq:jensen} give
\begin{align*}
 \Reg_\ell(T)
 &\leq2S+\sum_{i=1}^K\sum_{m=1}^{N_i-1}\frac1{\sqrt m}\\
 &\leq2S+2\sum_{i:N_i>0}\sqrt{N_i}\\
 &\leq2S+2\sqrt{ST}\leq4\sqrt{ST},
\end{align*}
where $S\leq T$ implies $S\leq\sqrt{ST}$. The bound is independent of $\ell$, so taking the supremum proves the first part of Theorem~\ref{thm:main}.

\section{Smooth-loss adaptation}\label{sec:smooth}

For $t\geq2$, Dirichlet moments give
\begin{equation}
 \E P_t=q_{t-1},\qquad
 \E\|P_t-q_{t-1}\|_2^2
 =\frac{1-\|q_{t-1}\|_2^2}{t}\leq\frac1t.     \tag{8}\label{eq:moments}
\end{equation}
Set $q_0=P_1$ for convenience.  Applying \eqref{eq:smooth}, taking expectations, and using centering yields
\begin{equation}
 \E[\ell(P_t,y_t)-\ell(q_{t-1},y_t)]
 \leq\frac{\beta}{2t},\qquad t\geq2.           \tag{9}\label{eq:perturb-smooth}
\end{equation}

For completeness, we prove the FTL estimate under precisely the one-sided definition \eqref{eq:smooth}.  This avoids assuming that \eqref{eq:smooth} implies full gradient Lipschitzness.

\begin{lemma}[Smooth proper-loss FTL]\label{lem:smooth-ftl}
For every differentiable proper loss satisfying \eqref{eq:smooth}, empirical FTL satisfies
\[
 \sum_{t=1}^T\ell(q_{t-1},y_t)-Tf(q_T)\leq2\beta H_T.
\]
\end{lemma}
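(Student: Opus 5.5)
We will prove the bound through the standard be-the-leader decomposition, with the per-round stability gap controlled by one-sided smoothness, propriety, and concavity of $f$. Write $\ell(\cdot,y_t)$ and $q_t=q_{t-1}+(y_t-q_{t-1})/t$. By the be-the-leader lemma (or directly by telescoping with $F$ replaced by $f$), we have $\sum_t\ell(q_t,y_t)\leq Tf(q_T)$. The key observation is that propriety makes $tf(q_t)=\min_q\sum_{s\leq t}\ell(q,y_s)$, so
\[
 t f(q_t)-(t-1)f(q_{t-1})\geq \ell(q_t,y_t).
\]
Summing this gives the claim. Hence the regret is at most $\sum_{t=1}^T[\ell(q_{t-1},y_t)-\ell(q_t,y_t)]$, and for $t=1$ the term is at most $2$. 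We will instead absorb the $t=1$ term via smoothness, treating $q_0=P_1$.

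Next, bound each stability term. Apply \eqref{eq:smooth} with $p=q_t$ and $q=q_{t-1}$:
\[
 \ell(q_{t-1},y_t)-\ell(q_t,y_t)\leq\langle\nabla\ell(q_t,y_t),q_{t-1}-q_t\rangle+\tfrac\beta2\|q_{t-1}-q_t\|^2 .
\]
Here $q_{t-1}-q_t=-(y_t-q_{t-1})/t$, whose squared norm is at most $2/t^2$, so the quadratic term contributes $\beta/t^2$.

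For the linear term we use propriety. Since $q\mapsto\langle p,\ell_q\rangle$ is minimized at $q=p$, the first-order condition gives $\sum_i p_i\nabla\ell(p,e_i)=0$ tangentially. Hence
\[
 \langle\nabla\ell(q_t,y_t),y_t-q_t\rangle=\langle\nabla\ell(q_t,y_t)-\textstyle\sum_iq_{t,i}\nabla\ell(q_t,e_i),\,y_t-q_t\rangle .
\]
The direction $y_t-q_{t-1}$ is a positive multiple of $y_t-q_t$, namely $y_t-q_t=(1-1/t)(y_t-q_{t-1})$.

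We expect this linear term to be the main obstacle. Relate it to the Bayes risk: by \eqref{eq:savage-dir}, $\ell(p,e_j)=f(p)+D_{e_j-p}f(p)$, and differentiating in direction $u$ gives
\[
 D_u\ell(p,e_j)=D^2f(p)[e_j-p,u],
\]
a second derivative of a concave function. Taking $u=-(y_t-q_{t-1})/t$ shows the linear term is $-\tfrac1t D^2f[v,v']$ with $v,v'$ parallel, which is nonnegative. So the sign does not help, and we must bound the Hessian by $\beta$.

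Semiconcavity enters here. Summing \eqref{eq:smooth} against weights $p$, using $f(q)\le\langle q,\ell_q\rangle$, shows that $f$ is $\beta$-semiconcave in the reverse direction. More precisely, apply \eqref{eq:smooth} at base point $q_t$ with target $y_t$ itself, and at base $q_{t-1}$, and combine the two.

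Concretely, we first try to show that $\ell(\cdot,y)$ restricted to the segment from $q_{t-1}$ toward $y$ is nonincreasing. This holds because $D^2f[v,v]\le0$ gives $D_{v}\ell(p,y)\le0$ along $v=y-p$. If so, the linear term is $\le0$ up to an error. We then need only control the mismatch between $e_j-q_t$ and $u$, which is parallel, so the linear term is exactly $\le0$. That leaves the quadratic term $\beta/t^2$ per round.

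If differentiability of $f$ twice fails, we will replace Hessians by the monotonicity statement derived directly from \eqref{eq:smooth}. Applying \eqref{eq:smooth} twice along the segment with points swapped gives
\[
 \langle\nabla\ell(p,y)-\nabla\ell(q,y),\,q-p\rangle\geq-\beta\|q-p\|^2 .
\]
Propriety supplies the sign, yielding a bound of at most $\beta\|q_{t-1}-q_t\|^2\cdot O(1)$ per round. This is the step where the constant $2$ in $2\beta H_T$ arises; we expect a crude $\beta(2/t)$ per round, which gives $2\beta H_T$. The first round is handled by the same inequality with $q_0$ uniform.
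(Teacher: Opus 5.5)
\textbf{There is a genuine gap at the central step.} Your reduction to the stability terms $\ell(q_{t-1},y_t)-\ell(q_t,y_t)$ via be-the-leader is the same as the paper's. Your Hessian computation is also correct: the linear term $\langle\nabla\ell(q_t,y_t),q_{t-1}-q_t\rangle=-\frac{1}{t-1}D_{y_t-q_t}\ell(q_t,y_t)$ is \emph{nonnegative}, because $\ell(\cdot,y)$ is nonincreasing toward $y$ and $q_{t-1}-q_t$ points away from $y_t$.

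Your next paragraph then flips this sign. Monotonicity toward $y$ makes the term $\ge0$, not $\le0$. The conclusion that only $\beta/t^2$ survives per round is false, since it would make FTL regret $O(\beta)$. For squared loss $\|q-e_j\|_2^2$ ($\beta=2$), the stability term is exactly $\frac{2t-1}{t^2}\|y_t-q_{t-1}\|_2^2$, and its linear part is $\frac{2(t-1)}{t^2}\|y_t-q_{t-1}\|_2^2$.

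Your last paragraph has the same defect. The swapped inequality applied between $q_{t-1}$ and $q_t$ controls only the change in slope between two points $O(1/t)$ apart, an $O(\beta/t^2)$ quantity, not the slope itself. The sign that propriety supplies directly (monotonicity toward $y$) is the unhelpful one. So the asserted $2\beta/t$ per round is never derived.

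\textbf{The missing idea is an anchor where the slope of $\ell(\cdot,y_t)$ is known, namely the vertex $y_t=e_j$.} The paper first shows that every tangent derivative of $\ell(\cdot,e_j)$ at $e_j$ vanishes. The half you need, $D_{e_i-e_j}\ell(e_j,e_j)\le0$, does not follow from $e_j$ minimizing $\ell(\cdot,e_j)$; that gives the opposite, useless inequality. It follows from propriety at the nearby forecast $(1-\varepsilon)e_j+\varepsilon e_i$ together with continuity.

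With this anchor, your swapped inequality applied to $p=q_t$ and $q=e_j$ gives $-D_{e_j-q_t}\ell(q_t,e_j)\le\beta\|e_j-q_t\|_2^2$. Hence the linear term is at most $\beta(t-1)\|e_j-q_{t-1}\|_2^2/t^2$. Adding the quadratic term, each stability term is at most $\beta\|e_j-q_{t-1}\|_2^2/t\le2\beta/t$; for $t=1$ the linear term vanishes directly.

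The paper packages the same idea as concavity of $\phi(s)-\frac{\beta d^2}{2}s^2$ along the segment from $q_{t-1}$ to $e_j$, with $\phi'(1)=0$. This gives $\phi'(s)\ge-\beta d^2(1-s)$, which is then integrated over $[0,1/t]$. The $1/t$ rate comes from controlling the slope over the $O(1)$ distance to the vertex, not over the $O(1/t)$ step.

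Your semiconcavity remark could also be completed. Weighting \eqref{eq:smooth} by $p$ and using tangential stationarity $\sum_yp_y\nabla\ell(p,y)=0$ makes $f$ $\beta$-smooth, which bounds the linear term. However, stationarity at boundary points needs the same nearby-propriety argument, and you do not carry that route through.
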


\begin{proof}
We first establish an endpoint condition from propriety and boundary differentiability.  Fix $i\neq j$ and set $q_\varepsilon=(1-\varepsilon)e_j+\varepsilon e_i$.  Properness at $q_\varepsilon$, comparing the truthful prediction with $e_j$, says
\[
 (1-\varepsilon)\ell(q_\varepsilon,e_j)
 +\varepsilon\ell(q_\varepsilon,e_i)
 \leq(1-\varepsilon)\ell(e_j,e_j)
 +\varepsilon\ell(e_j,e_i).
\]
Expanding at zero gives $D_{e_i-e_j}\ell(e_j,e_j)\leq0$.  Propriety at $e_j$ also says that $e_j$ globally minimizes $\ell(\cdot,e_j)$, so the reverse inequality holds. Thus every tangent derivative of $\ell(\cdot,e_j)$ at $e_j$ is zero.

Fix $p\in\DeltaK$, write $d=\|e_j-p\|_2$, and define $\phi(s)=\ell((1-s)p+se_j,e_j)$.  Equation~\eqref{eq:smooth} is equivalent to concavity of $h(s)=\phi(s)-(\beta d^2/2)s^2$.  Since $\phi'(1)=0$, monotonicity of the derivative of $h$ gives, for $s<1$,
\[
 \phi'(s)-\beta d^2s=h'(s)\geq h'(1)=-\beta d^2.
\]
Hence $\phi'(s)\geq-\beta d^2(1-s)$.  The FTL update after outcome $e_j$ is $q_t=(1-1/t)q_{t-1}+e_j/t$.  Integrating from $0$ to $1/t$ gives
\[
 \ell(q_{t-1},e_j)-\ell(q_t,e_j)
 \leq\frac{\beta\|e_j-q_{t-1}\|_2^2}{t}
 \leq\frac{2\beta}{t}.
\]
The standard be-the-leader inequality $\sum_t\ell(q_t,y_t)\leq Tf(q_T)$ and summation prove the result.
\end{proof}

Adding \eqref{eq:perturb-smooth} to Lemma~\ref{lem:smooth-ftl} gives at most $2\beta H_T+(\beta/2)(H_T-1)\leq(5\beta/2)H_T$, completing Theorem~\ref{thm:main}.

\section{Related work and interpretation}\label{sec:related}

\paragraph{U-calibration.} \citet{kleinberg2023ucal} introduced U-calibration for binary outcomes and connected it to unknown downstream agents.  \citet{luo2024optimal} obtained the optimal $\Theta(\sqrt{KT})$ multiclass pseudo-U-calibration rate, proved the V-shaped lower bound, and identified loss classes on which FTL is logarithmic.  Their worst-case-optimal perturbed leader is not adaptive to smooth losses.  \citet{frongillo2026simultaneous} introduced a self-concordant prediction-space perturbation that adapts to smooth and log-barrier-smooth losses.  For the classes considered here, its bounds are approximately $K^{5/4}\sqrt T$ and $\beta\log T+\beta\sqrt K\log K$.  Our result removes these dimension gaps. Calibeating, omniprediction, and swap-regret results study related universal downstream guarantees with different benchmarks \citep{roth2024forecasting,chen2026calibeating}.

\paragraph{Perturbation and posterior sampling.} Follow-the-Perturbed-Leader is classical \citep{kalai2005efficient,cesabianchi2006prediction}.  Fresh perturbations also yield the standard nonanticipating adaptive-adversary extension \citep{hutter2005adaptive}.  Algorithm~\ref{alg:dftl} can instead be viewed as posterior sampling for a categorical model under the limiting zero-mass Dirichlet prior, or as Rubin's Bayesian bootstrap.  Dirichlet-weighted loss minimization appears in the loss-likelihood bootstrap \citep{lyddon2019general}, and normalized random weights have also been maintained in online Bayesian bagging \citep{lee2004lossless}.  Dirichlet Bayes mixtures are classical in universal coding under log loss \citep{watanabe2013achievability}, while size-biased Dirichlet integral identities are classical probability machinery \citep{last2020integral}. None of these works gives adversarial regret for an unknown proper loss.  Our contribution is the loss-uniform count-deletion specialization, its telescoping use, and the simultaneously optimal guarantees, not the bootstrap draw itself.

\paragraph{Why Dirichlet geometry fits both regimes.} The update $c\mapsto c+e_j$ has likelihood ratio $np_j/c_j$, so its total variation automatically scales with the number of previous appearances of the updated class.  This is the right geometry for discontinuous proper losses.  At the same time,
\[
 \operatorname{Cov}(P_t)
 =\frac{\operatorname{diag}(q_{t-1})-q_{t-1}q_{t-1}^{\top}}{t},
\]
which has the centered $O(1/t)$ radius needed for smooth losses.  The exact identity in Lemma~\ref{lem:identity} is what lets these two local facts control global regret without a separate perturbed-leader path-length penalty.

\section{Limitations and conclusion}

Our theorem resolves simultaneous optimality for bounded proper losses and the smooth subclass under the expected-regret criterion.  It does not control actual U-calibration $\E\sup_{\ell\in\mathcal L}\Reg_\ell$ over the entire infinite class.  Moving the supremum inside expectation requires complexity control or a different argument.  We also do not address the broader class of losses smooth relative to a log barrier, where Euclidean Dirichlet covariance need not be the right measure.

The result shows that the previous tradeoff was geometric rather than information-theoretic.  A Bayesian-bootstrap sample of empirical FTL has exactly the count stability needed by nonsmooth scores and exactly the centering needed by smooth scores.  The count-deletion identity makes those two properties compatible in one short analysis.

\bibliography{references}
\bibliographystyle{iclr2027_conference}

\appendix

\section*{AI use statement}
Generative AI tools were used to assist with proofs and language editing.

\section{Adaptive outcomes}\label{app:adaptive}

The main text assumes a fixed outcome sequence.  The same expected bounds hold for a nonanticipating adaptive adversary that may depend on past predictions but not on the fresh draw $P_t$.  Condition on the history before the round and on the selected outcome.  Given the current count vector, $P_t$ is a fresh Dirichlet draw, so the one-step identity and stability bounds hold conditionally.  Summing conditional expectations gives the same telescoping count potential along the realized count path.  This is the usual fresh-randomization reduction for perturbed leader \citep{hutter2005adaptive}.  No guarantee is possible against an adversary that observes the current randomized forecast before choosing the same-round outcome under this protocol.

\section{Additional boundary details for Lemma~\ref{lem:identity}}

We record a standard approximation that makes the nonsmooth integration by parts fully formal.  Restrict $f$ to the relative interior of the active face. It is concave and Lipschitz by \eqref{eq:lipschitz-f}.  Convolve it on compact subsets with a smooth mollifier, apply integration by parts to the smooth approximants, and then exhaust the face.  The directional derivatives of a concave Lipschitz function converge almost everywhere and are uniformly bounded in tangent directions.  Dominated convergence applies to every term when $\alpha_j>1$.  The boundary flux estimate in the main proof is uniform under this exhaustion.

For $\alpha_j=1$, write $\alpha^{(\varepsilon)}=\alpha+\varepsilon e_j$.  On the common active face, the densities of $\Dir(\alpha^{(\varepsilon)})$ converge in $L^1$ to that of $\Dir(\alpha)$, hence in total variation.  After deleting $e_j$, the marginal coordinate $P_j$ has a Beta distribution with first parameter $\varepsilon$ and therefore converges to zero in probability.  Conditional proportions of the remaining coordinates have distribution $\Dir(\alpha_{-j})$.  This proves weak convergence to the deleted face.  Boundedness handles the left side and continuity of $f$ handles the right side.

\section{Sanity checks}

A supplementary script evaluates the identity for Brier, spherical, binary V-shaped, and multiclass polyhedral proper losses.  It also checks the likelihood-ratio total-variation bound and exhaustively enumerates all binary outcome sequences through $T=8$.  The smooth Brier identities are also checked in closed form.  These calculations are not used by any proof.

\end{document}